\theoremstyle{plain}
\newtheorem{theorem}{Theorem}[section]
\theoremstyle{definition}
\theoremstyle{remark}
\newcommand{\algnamenospace}{BCART-PCFG}
\newcommand{\algname}{BCART-PCFG }
\icmltitlerunning{Bayesian Decision Trees via Tractable Priors and Probabilistic Context-Free Grammars}
\begin{document}

\twocolumn[
\icmltitle{Bayesian Decision Trees via Tractable Priors and Probabilistic Context-Free Grammars}



\icmlsetsymbol{equal}{*}

\begin{icmlauthorlist}
\icmlauthor{Colin Sullivan}{equal,st}
\icmlauthor{Mo Tiwari}{equal,st}
\icmlauthor{Sebastian Thrun}{st}
\icmlauthor{Chris Piech}{st}
\end{icmlauthorlist}

\icmlaffiliation{st}{Department of Computer Science, Stanford University, Stanford, California, USA}

\icmlcorrespondingauthor{Colin Sullivan}{colins26@stanford.edu}
\icmlcorrespondingauthor{Mo Tiwari}{motiwari@stanford.edu}

\icmlkeywords{Machine Learning, bayesian decision trees, BCART, BART, random forests, ensemble models, ICML}

\vskip 0.3in
]



\printAffiliationsAndNotice{\icmlEqualContribution} 


\begin{abstract}
\label{sec:abstract}
Decision Trees are some of the most popular machine learning models today due to their out-of-the-box performance and interpretability.
Often, Decision Trees models are constructed greedily in a top-down fashion via heuristic search criteria, such as Gini impurity or entropy.
However, trees constructed in this manner are sensitive to minor fluctuations in training data and are prone to overfitting.
In contrast, Bayesian approaches to tree construction formulate the selection process as a posterior inference problem; such approaches are more stable and provide greater theoretical guarantees.
However, generating Bayesian Decision Trees usually requires sampling from complex, multimodal posterior distributions.
Current Markov Chain Monte Carlo-based approaches for sampling Bayesian Decision Trees are prone to mode collapse and long mixing times, which makes them impractical.
In this paper, we propose a new criterion for training Bayesian Decision Trees.
Our criterion gives rise to \algnamenospace, which can efficiently sample decision trees from a posterior distribution across trees given the data and find the maximum a posteriori (MAP) tree.
Learning the posterior and training the sampler can be done in time that is polynomial in the dataset size.
Once the posterior has been learned, trees can be sampled efficiently (linearly in the number of nodes).
At the core of our method is a reduction of sampling the posterior to sampling a derivation from a probabilistic context-free grammar.
We find that trees sampled via \algname perform comparable to or better than greedily-constructed Decision Trees in classification accuracy on several datasets.
Additionally, the trees sampled via \algname are significantly smaller -- sometimes by as much as 20x.
\end{abstract}


\section{Introduction}
\label{sec:introduction}

Decision Trees (DTs) and related ensemble methods such as Random Forest (RF) are widely-used machine learning methods. 
DTs recursively partition the feature space by performing if/then/else comparisons on the feature values of a given datapoint. 
Each internal node in a DT consists of a subset of the data that is split into two child nodes based on that node's decision rule, which partitions points according to whether a given feature value is greater than or less than a fixed threshold.
At the end of this process, a prediction label is generated for the datapoint based on the leaf node of the decision tree in which it falls.

In most popular implementations of these models, each base learner (a single DT) is trained greedily from the top-down.
More specifically, each node is split according to the single best rule that would lower its heterogeneity.
Which decision rule is best is determined by heuristic proxies for end performance, such as Gini impurity or entropy in classification and mean squared error (MSE) in regression.

Single DTs trained in this greedy fashion, however, can often be sensitive to minor fluctuations in the training data and have limited theoretical guarantees.
RF was developed to combat overfitting of single DTs \cite{tinkamhoRandomDecisionForests1995,breimanRandomForests2001b}.
In RF, each tree's decision rules are determined by looking at only a random subset of available features and using a bootstrap sample of the training data.
Popular extensions of the RF method include boosted methods such as XGBoost and LightGBM \cite{chenXGBoostScalableTree2016,keLightGBMHighlyEfficient2017}.


\begin{figure*}
\begin{center}
\centerline{\includegraphics[width=\columnwidth]{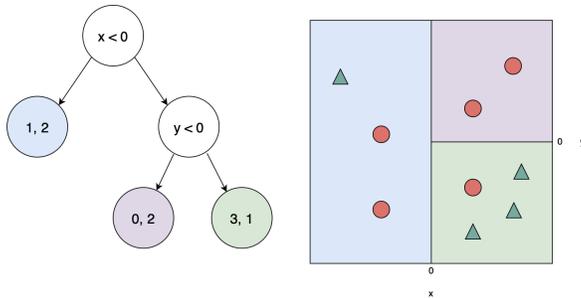}}
\caption{An example decision tree based on the features $x$ and $y$. Points with $x \geq 0$ and $y > 0$ can be classified as one class (green triangles), whereas points in the other leaf nodes are classified as another (red circles).}
\label{fig:decision-tree}
\end{center}
\end{figure*}


Such DT ensemble methods are widely used in practice and improve upon individual DTs' performance and stability.
However, these methods lose their simplicity and interpretability.
For these reasons, Bayesian CART (BCART) Model Search was proposed as an alternative to greedy DT construction algorithms for constructing individual DTs.
BCART is a probabilistic method that assigns a prior probability distribution over the set of potential trees and updates the corresponding posterior distribution given the data.
Individual trees generated by BCART, which we call Bayesian Decision Trees (BDTs), have been shown to have performance commensurate with that of entire greedily-constructed RFs \cite{lineroReviewTreebasedBayesian2017}.
Furthermore, the posterior generated by the BCART method can be used to quantify uncertainty in the tree selection process.

In the BCART formulation, each individual DT is sampled from the posterior distribution of trees given the data, typically using Markov chain Monte Carlo (MCMC) methods.
These methods typically only permit local transitions between trees, such as growing or pruning trees by a single node \cite{chipmanBayesianCARTModel1998c, denisonBayesianCARTAlgorithm1998b}.
MCMC-based BCART methods have been shown to result in smaller and equally performant individual trees when compared with greedily constructed DTs and RF \cite{lineroReviewTreebasedBayesian2017}.
Unfortunately, if the posterior over trees is naturally complex and multimodal, MCMC-based methods encounter long mixing times and exhibit mode collapse \cite{ronenMixingTimeLower2022a,yannottyModelMixingUsing2023a}.
In practice, this often renders MCMC-based methods impractical.


\textbf{Contributions}: In this work, we propose a new Bayesian prior as a criterion for the Bayesian sampling objective when sampling decision trees. 
This criterion leads to an analytically tractable posterior distribution over trees given the data.
We propose \algname to sample from this posterior;
\algname is able to sample trees efficiently. 
At the heart of our method is a theoretical connection between our choice of prior and sampling from a probabilistic context-free grammar.
Sampled trees demonstrate performance comparable to or better than existing, greedily-constructed trees, and are often significantly smaller.
Furthermore, the sampling procedure is easily parallelizable and \algname has only a single hyperparameter, which makes hyperparameter tuning simple.


\section{Related Work}
\label{sec:related_work}

DTs are a popular for their strong out-of-the-box performance; DTs still outperform neural network approaches on tabular data \cite{grinsztajnWhyTreeBasedModels2022a}.
Popular algorithms for greedily training DTs include CART \cite{breimanClassificationRegressionTrees1984a}, ID3 \cite{quinlanInductionDecisionTrees1986b}, and C4.5 \cite{quinlanImprovedUseContinuous1996a}. 
Recently, other work has focused on using metaheuristics for the training criterion \cite{rivera-lopezInductionDecisionTrees2022b} or accelerating the training procedure via adaptive sampling \cite{tiwariMABSplitFasterForest2022a}.

BDTs were first introduced simultaneously in \citet{denisonBayesianCARTAlgorithm1998b} and \citet{chipmanBayesianCARTModel1998c}.
Both works propose a prior over tree structures and suggest sampling from the posterior via MCMC methods.
Since these papers, most followup work on BDTs has focused on accelerating the sampling procedure, e.g., via particle filtering \cite{lakshminarayananTopdownParticleFiltering2013} or other methods \cite{geelsTaxicabSamplerMCMC2022a}, and extending capabilities to larger datasets \cite{denisonBayesianPartitionModelling2002a}.

Significant recent work demonstrates the advantages of Bayesian methods for generating tree ensembles as opposed to the greedy, algorithmic method; see \citet{lineroReviewTreebasedBayesian2017} for a recent review. In this line of work, \citet{matthewBayesianEmpiricalBayesian2015} shows promising results for Bayesian tree ensembles but stops short of a fully Bayesian approach; instead, they sample decision stumps greedily and grow these stumps with a Bayesian approach. \citet{nutiExplainableBayesianDecision2021b} showed that what they call greedy modal tree (GMT) can match the performance of the best trees constructed by greedy splitting criteria. 
However, their approach only considers a posterior across single-level splits and greedily constructs trees from the top down using this metric.

One of the most successful applications of BDTs is in Bayesian Additive Regression Trees (BARTs) \cite{chipmanBARTBayesianAdditive2010a}, which generate multiple Bayesian decision trees in order to partition data for downstream tasks, and its extensions \cite{deshpandeNewBARTPrior2022a, maiaGPBARTNovelBayesian2022a, nutiAdaptiveBayesianReticulum2019}.
However, similar to BDTs, these approaches often suffer from mode collapse and large mixing times, often rendering them unusable \cite{ronenMixingTimeLower2022a, yannottyModelMixingUsing2023a}.

We note that our approach is inspired by connections to probabilistic context-free grammars \cite{johnsonBayesianInferencePCFGs2007b, kimCompoundProbabilisticContextFree2019a, lieckRecursiveBayesianNetworks2021a, pynadathGeneralizedQueriesProbabilistic1998a, toutanovaFeatureSelectionRich2002a, saranyadeviContextfreeGrammarBased2021a, stolckeInducingProbabilisticGrammars1994b, teichmannExpressingContextFreeTree2016a, vazquezGramMLExploringContextFree2022a, hwangInducingProbabilisticPrograms2011a}.
Our sampling approach is also inspired by recently developed methods in Generative Flow Networks \cite{bengioGFlowNetFoundations2022a,doImprovingGenerativeFlow2022a,panGenerativeAugmentedFlow2022a,zimmermannVariationalPerspectiveGenerative2022a}.


\section{Preliminaries}
\label{sec:preliminaries}

We consider a dataset $X, Y$ of $n$ independent observations $(\textbf{x}^{(1)}, y^{(1)}), \dots, (\textbf{x}^{(n)}, y^{(n)})$.
we focus on the classification setting for simplicity, though our results also apply to the regression setting.
In classification, each vector $\textbf{x}^{(i)}$ consists of $d$ features $\textbf{x} = (x_1, \dots, x_d)$, and each label $y \in \{1, \dots, C\}$. 

Throughout this work, we let $T$ denote a binary tree and let $b(T)$ denote the number of terminal (leaf) nodes of $T$.
We let $M_j \in b(T)$ denote the terminal nodes of tree $T$ for $j = 1, 2, \ldots, b(T)$ and $N_k$ denote the non-terminal (internal) nodes of $T$ for $k = 1, 2, \ldots, b(T) - 1$.

Each internal node $N_k$ of $T$ is associated with a splitting rule, denoted $s_k$, which can be written as the subset $\{x_i < v\}$ for some $v \in \mathbf{R}$. $\overline{s}$ denotes the negation of the splitting rule $s$ (e.g., if $s$ is $x_1 < 3$ then $\overline{s}$ is $x_1 \geq 3$).
Points satisfying the splitting rule of an internal node are sent to its left child node and all others (those that satisfy the complement) are sent to its right child node. This splitting process continues until these points reach a terminal node in $T$.

Each node of $T$ is also associated with a bounding box that is axis-aligned (in feature space) and determined by the splits performed in $T$ to reach $N$ from the root node, which itself is denoted by $N_0$. 
With a slight abuse of notation, we use $N$ to refer to a node or its associated bounding box, with the meaning clear from context. 
Similarly, we use $N \cap s$ and $N \cap \overline{s}$ to denote the children of $N$ created by splitting according to rule $s$, i.e., the left and right children of $N$ when it is split by rule $s$, respectively. 

If, for example, a node $N_4$ has three ancestors $N_0, N_2, N_3$ in which it is in the left, right, and left subtrees of, respectively, then the bounding box associated with $N_4$ is $N_0 \cap s_0 \cap \overline{s_2} \cap s_3$.

We let $X_N$ and $Y_N$ or $X_M$ and $Y_M$ denote the datapoints and their associated labels, respectively, at an internal node $N$ or a terminal node $M$. Two bounding boxes are considered equivalent if they contain the same datapoints.

\subsection{Probabilistic Context Free Grammars}

We briefly recapitulate Probablistic Context-Free Grammars (PCFGs). 
PCFGs are formally defined as a tuple of five elements: the start symbol, a set of nonterminal symbols, a set of terminal symbols, a set of production rules, and the set of probabilities over those production rules.
A derivation in a PCFG consists of recursively applying production rules to the start symbol.
The probability of an ordered derivation occurring is equal to the product of the probabilities for the production rules applied to generate that derivation.


\section{Bayesian Decision Trees and \algname}
\label{sec:bdts}

A Bayesian Decision Tree (BDT) is a binary tree $T$ and parameter $\Theta = (\bm\theta_1, \dots, \bm\theta_{b(T)})$ that parameterizes the distributions over labels $f(y | \bm\theta_1, \dots, f(y | \bm\theta_{b(T)})$ in each of the terminal nodes. 
Each distribution $f(y | \bm\theta_i)$ is a categorical distribution parameterized by $\theta_{i, 1}, \dots, \theta_{i, C}$ where each $\theta_{i, c}$ denotes the probability that a datapoint in terminal node $i$ is assigned label $c$. 
With this notation, we state the following theorem:

\begin{theorem}
\label{thm:y_given_xt}
Assume that labels are drawn independently from their respective distributions conditioned on $T$ and $\Theta$.  
Further assume a Dirichlet prior across these distributions of labels, $\bm\alpha = (\alpha_1, \dots, \alpha_C)$.
Then the likelihood for dataset labels $Y$ given the predictors $X$ and binary tree $T$ is

\begin{equation}
\label{eqn:likelihood}
    \Pr[Y \mid X, T] = \prod_{j=1}^{b(T)} \frac{\beta(\mathbf{n_j} + \bm\alpha)}{\beta(\bm\alpha)}
\end{equation}

where $\mathbf{n_j} = (n_{j, 1}, \dots, n_{j, C})$ is a vector where the $c$th coordinate is the number of occurrences of label $c$ in the $j$th terminal node, $\beta$ is the $C$-beta function given by

\begin{equation}
\label{eqn:gamma}
    \beta(\gamma_1, \dots, \gamma_C) = \frac{\prod_{i=1}^C \Gamma(\gamma_i)}{\Gamma\left(\sum_{i=1}^C \gamma_i\right)},
\end{equation}

and $\Gamma$ is the gamma function given by $\Gamma(n) = (n - 1)!$ for integer $n$.
\end{theorem}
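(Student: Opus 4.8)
The plan is to condition on the leaf parameters $\Theta$, exploit the fact that $X$ and $T$ together deterministically assign each datapoint to a terminal node, and then integrate out $\Theta$ via Dirichlet--multinomial conjugacy.

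First I would observe that, given $T$, each predictor $\mathbf{x}^{(i)}$ follows a unique root-to-leaf path determined by the splitting rules $s_k$, so it lands in exactly one terminal node; let $I_j$ denote the indices of datapoints reaching terminal node $M_j$. Because labels are drawn independently conditioned on $T$ and $\Theta$, and every point in node $j$ draws its label from the same categorical $f(\cdot \mid \bm\theta_j)$, the conditional likelihood factorizes over leaves:
\[
\Pr[Y \mid X, T, \Theta] = \prod_{j=1}^{b(T)} \prod_{i \in I_j} f(y^{(i)} \mid \bm\theta_j) = \prod_{j=1}^{b(T)} \prod_{c=1}^{C} \theta_{j,c}^{\, n_{j,c}},
\]
where $n_{j,c}$ counts label $c$ among the points in leaf $j$, matching the definition of $\mathbf{n_j}$.

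Next I would marginalize out $\Theta$. Since the prior places an independent $\mathrm{Dirichlet}(\bm\alpha)$ on each $\bm\theta_j$, with density proportional to $\prod_{c} \theta_{j,c}^{\alpha_c - 1}$ and normalizing constant $\beta(\bm\alpha)$, the joint integral over the product of simplices splits into a product of per-leaf integrals:
\[
\Pr[Y \mid X, T] = \prod_{j=1}^{b(T)} \frac{1}{\beta(\bm\alpha)} \int \prod_{c=1}^{C} \theta_{j,c}^{\, n_{j,c} + \alpha_c - 1} \, d\bm\theta_j.
\]
Each integral is exactly the normalizing constant of a $\mathrm{Dirichlet}(\mathbf{n_j} + \bm\alpha)$ density, i.e.\ $\beta(\mathbf{n_j} + \bm\alpha)$, which yields the claimed product $\prod_j \beta(\mathbf{n_j} + \bm\alpha)/\beta(\bm\alpha)$.

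This is essentially routine Dirichlet--multinomial conjugacy, so I do not expect a deep obstacle; the only points requiring care are (i) justifying that conditioning on \emph{both} $X$ and $T$ makes the leaf assignments deterministic, so that the counts $\mathbf{n_j}$ are well defined and non-random given the conditioning, and (ii) confirming the interchange that lets the single high-dimensional integral over $\Theta$ factor into independent per-leaf integrals, which is immediate from the product form of both the prior and the conditional likelihood.
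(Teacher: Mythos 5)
Your proof is correct, and it is actually more rigorous than the paper's own argument, though both rest on the same essential ingredient (per-leaf Dirichlet--multinomial conjugacy). The paper proceeds in the opposite order: it first factorizes the \emph{marginal} likelihood per datapoint, $\Pr[Y \mid X, T] = \prod_i \Pr[y_i \mid x_i, T]$, writes each factor as an integral $\int \Pr[y_i \mid \bm\theta_j]\Pr[\bm\theta_j]\,d\bm\theta_j$, and then ``interchanges'' the product over datapoints in a leaf with the integral, citing Denison et al.\ for the resulting Dirichlet integral. As literally written, both of those steps are problematic: the theorem only assumes independence of labels conditioned on $T$ \emph{and} $\Theta$, so after $\Theta$ is marginalized out the labels within a leaf are exchangeable but not independent (the per-datapoint factorization of the marginal likelihood is not licensed by the hypothesis), and a product of integrals is not in general equal to an integral of a product --- the two sloppy steps happen to cancel and land on the correct formula. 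Your route avoids both issues by conditioning on $\Theta$ first, where the factorization over leaves and datapoints genuinely holds, and then performing a single marginalization that splits into independent per-leaf integrals because both the prior and the conditional likelihood are products over leaves; each per-leaf integral is then the Dirichlet normalizing constant $\beta(\mathbf{n_j} + \bm\alpha)$. This ordering makes the argument self-contained (no appeal to an external reference for the key integral) and is the standard, airtight way to derive this marginal likelihood.
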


\begin{proof}

By the independence assumption, we may write
\begin{equation}
    \Pr[Y | X, T] = \prod_{i = 1}^n \Pr[y_i | x_i, T].
\end{equation}

Furthermore, by grouping the terms in the product on the right hand side by leaf node, we may write

\begin{align}
    \prod_{i = 1}^n \Pr[y_i | x_i, T] &= \prod_{j = 1}^{b(T)} \prod_{r=1}^{n_j} \int_{\bm\theta} \Pr[y_i | \bm\theta_j, x_i, T] \Pr[\bm\theta_j] d\bm\theta_j \\
    &= \prod_{j = 1}^{b(T)} \prod_{r=1}^{n_j} \int_{\bm\theta} \Pr[y_i | \bm\theta_j] \Pr[\bm\theta_j] d\bm\theta_j
\end{align}

where $n_j$ is the number of datapoints in leaf $j$ of tree $T$ and since $Pr[y_i | \bm\theta_j, x_i, T] = Pr[y_i | \bm\theta_j]$ as the prediction $y_i$ only depends on the parameter $\bm\theta_j$. 
Furthermore, 

\begin{align}
    \prod_{r=1}^{n_j} \int_{\bm\theta} \Pr[y_i | \bm\theta_j] \Pr[\bm\theta_j]d\bm\theta_j &= \int_{\bm\theta} \prod_{r=1}^{n_j}  \Pr[y_i | \bm\theta_j] \Pr[\bm\theta_j]d\bm\theta_j \\
    &= \frac{\beta(\mathbf{n_j} + \bm\alpha)}{\beta(\bm\alpha)}
\end{align}

for each $j$, where we used the fact that $ \Pr[\bm\theta_j] \propto \theta_{j, 1}^{\alpha_1 - 1} \ldots \theta_{j,C}^{\alpha_C - 1}$ and results from \cite{denisonBayesianCARTAlgorithm1998b}. Thus 

\begin{equation}
    \Pr[Y | X, T] = \prod_{j=1}^{b(T)}  \frac{\beta(\mathbf{n_j} + \bm\alpha)}{\beta(\bm\alpha)}
\end{equation}

as desired.
\end{proof}

Theorem \ref{thm:y_given_xt} allows us to compute the the likelihood of labels $Y$ occurring for data $X$ under a given tree structure $T$ and choice of leaf node probability distributions $f(y | \bm\theta_1), \dots, f(y | \bm\theta_{b(T)})$ where each $f$ is a Dirichlet distribution parameterized by its corresponding $\bm\theta$.
For example, with the tree $T$ and points in Figure \ref{fig:decision-tree}, assuming a uniform prior where $\bm\alpha = (1, 1)$, the conditional likelihood $\Pr[Y | X, T]$ of generating the provided labels is $\frac{\beta(2, 3)\beta(1, 3)\beta(4, 2)}{\beta(1, 1)^3}$.

Our goal, however, is to sample trees from the posterior $\Pr [T \mid X, Y]$. To do so, we must also choose a prior across trees $\Pr[T | X]$.
As noted in previous work \cite{buntineLearningClassificationTrees1992}, the support of this distribution will depend on our data $X$ due to the following natural constraints, which we call the Tree Constraints:
\begin{enumerate}
\label{lst:constraints}
    \item the bounding boxes of nodes in $T$ must be nonempty, and
    \item any splits on a node which result in the same two divided bounding boxes are considered equivalent.
\end{enumerate}
These constraints significantly reduce the considered space of trees considered.
In particular, the number of possible trees is finite.

We use a basic prior mentioned in previous work \cite{denisonBayesianCARTAlgorithm1998b,chipmanBayesianCARTModel1998c} which scales exponentially with the number of terminal nodes in the tree. 
This choice of prior biases the tree search towards smaller trees:
\begin{equation}
\label{eqn:prior}
    \Pr[T \mid X] \propto \phi^{-b(T)}
\end{equation}

\begin{theorem}
\label{thm:posterior}
The posterior over trees resulting from the likelihood in Equation \eqref{eqn:likelihood} and prior in Equation \eqref{eqn:prior} is:
\begin{equation}
    \Pr[T \mid X, Y] \propto \left( \prod_{k=1}^{b(T)} \frac{\beta(\mathbf{n_k} + \bm\alpha)}{\beta(\bm\alpha)} \right) \left( \phi^{-b(T)} \right)
\end{equation}
\end{theorem}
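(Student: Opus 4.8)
The plan is to obtain this statement as a direct consequence of Bayes' theorem, combining the likelihood established in \cref{thm:y_given_xt} with the prior in \eqref{eqn:prior}. Since both ingredients are already in hand, the proof is essentially bookkeeping: the only real content is identifying which factors depend on $T$ and which can be folded into the proportionality constant.

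First I would write the posterior using Bayes' theorem,
\begin{equation}
    \Pr[T \mid X, Y] = \frac{\Pr[Y \mid X, T]\,\Pr[T \mid X]}{\Pr[Y \mid X]}.
\end{equation}
Next I would argue that the denominator $\Pr[Y \mid X]$ does not depend on $T$: it is the marginal likelihood $\Pr[Y \mid X] = \sum_{T'} \Pr[Y \mid X, T']\,\Pr[T' \mid X]$, obtained by marginalizing over the (finite, by the Tree Constraints) set of admissible trees, and hence is a constant with respect to the particular tree $T$. It may therefore be absorbed into a proportionality constant, giving $\Pr[T \mid X, Y] \propto \Pr[Y \mid X, T]\,\Pr[T \mid X]$. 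I would then substitute the likelihood from \eqref{eqn:likelihood} for the first factor and the prior from \eqref{eqn:prior} for the second factor, noting that the unspecified normalization constant in \eqref{eqn:prior} likewise merges into the overall proportionality. Relabeling the product index from $j$ to $k$ (a cosmetic change of dummy variable to match the statement) yields exactly
\begin{equation}
    \Pr[T \mid X, Y] \propto \left( \prod_{k=1}^{b(T)} \frac{\beta(\mathbf{n_k} + \bm\alpha)}{\beta(\bm\alpha)} \right)\left( \phi^{-b(T)} \right).
\end{equation}

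There is no substantive obstacle here; the closest thing to a subtlety is justifying that the two constants being discarded — the marginal $\Pr[Y \mid X]$ and the normalizer hidden in the prior's $\propto$ — are genuinely independent of $T$, so that collapsing both into a single proportionality symbol is legitimate. I would also take care to keep the conditioning on $X$ explicit throughout, since the support of the prior depends on $X$ via the Tree Constraints; this is what makes the set of trees finite and the marginal well-defined as a finite sum. With those points noted, the result follows immediately.
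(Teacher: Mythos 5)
Your proposal is correct and follows exactly the paper's own argument: a direct application of Bayes' rule, absorbing the $T$-independent marginal $\Pr[Y \mid X]$ into the proportionality constant and substituting the likelihood from \eqref{eqn:likelihood} and the prior from \eqref{eqn:prior}. The extra care you take in justifying that the discarded constants are independent of $T$ (and that the tree space is finite) is a minor elaboration of the same proof, not a different route.
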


\begin{proof}
A straightforward application of Bayes' rule yields

\begin{align}
    \Pr[T \mid X, Y] & \propto \Pr[Y \mid X, T]\Pr[T \mid X] \\
    &= \left( \prod_{k=1}^{b(T)} \frac{\beta(\mathbf{n_k} + \bm\alpha)}{\beta(\bm\alpha)} \right) \left( \phi^{-b(T)} \right)
\end{align}
    
\end{proof}

Theorem \ref{thm:posterior} provides us with a posterior for the trees given the data.
In later sections, we will define a method of sampling trees according to this posterior.

\section{Methodology}
\label{sec:methodology}

\subsection{Bounding Box Score}
Let $S(N)$ denote the set of splits of the form $s = \{x_i < d\}$ such that $N \cap s$ and $N \cap \overline{s}$ result in two nonempty bounding boxes. 
As before, we let any two such splits that result in the same two bounding boxes be considered equivalent.
$S(N)$ contains one representative from each equivalence class. 

We define the score of a bounding box to be:
\begin{equation}
    Q(N) \coloneqq L(N) + \frac{1}{\phi} \sum_{s \in S(N)} Q(N \cap s) Q(N \cap \overline{s})
\end{equation}
where $L(N) \coloneqq \Pr[Y_N | X_N, \odot]$ is the likelihood of assigning the labels of $Y_N$ given an empty tree $\odot$. From Theorem \ref{thm:y_given_xt}, we have
\begin{equation}
    L(N) = \frac{\beta(\bm{n}(N) + \bm\alpha)}{\beta(\bm\alpha)}
\end{equation}
where $\bm{n}(N) \in \mathbf{R}^C$ represents the number of datapoints of each category in bounding box $N$.

\subsection{Probabilistic Context-Free Grammar of Bounding Boxes}
Using the score function $Q$, we define a specific PCFG across bounding boxes called PCFG*.
In PCFG*, the start symbol is outermost bounding box $N_0$ which contains all indices in our dataset.
The non-terminal symbols of PCFG* are the bounding boxes associated with non-terminal nodes.
The terminal symbols are the bounding boxes associated with terminal nodes.
We define two productions rules for every non-terminal symbol, specified as follows:
\begin{itemize}
    \item split rule: $N \rightarrow N \cap s, N \cap \bar{s}$
    \item stop rule: $N \rightarrow M$
\end{itemize}
Each split rule represents a division of one bounding box into two smaller non-empty bounding boxes. A stop rule represents a transition from a non-terminal bounding box to a terminal bounding box which contains the same points but will not split further.
We define the probability of split production rule $N \rightarrow N \cap s, N \cap \overline{s}$ to be:
\begin{equation}
    \Pr[N \rightarrow N \cap s, N \cap \overline{s}] = \frac{Q(N \cap s) Q(N \cap \overline{s})}{\phi Q(N)}
\end{equation}
and the probability of a stop production rule $N \rightarrow M$ t as:
\begin{equation}
    \Pr[N \rightarrow M] = \frac{L(N)}{Q(N)}
\end{equation}


Note that a string derived by PCFG* represents an ordered set of bounding boxes; the parse tree of this string corresponds directly with a decision tree containing the same internal/terminal nodes and internal node splits. 
Furthermore, every decision tree satisfying the Tree Constraints can be represented by some such parse tree, and so there is a bijection between parse trees in PCFG* and decision trees, where each of the production rules in the tree map directly to a split rule of the corresponding internal node of the decision tree.
We denote by $P_T$ the parse tree associated with decision tree $T$ under this bijection.
The probability that PCFG* generates a particular parse tree $P_T$ or decision tree $T$ is the product of its production rule probabilities. 
We are now ready to state our main result:

\begin{theorem}
\label{thm:parse_tree_prob}
The probability of a parse tree $P_T$ to be generated in PCFG* is
\begin{align}
    \Pr[P_T] &= \Pr[T | X, Y] \\
    &\propto \left[ \prod_{k=1}^{b(T)} \frac{\beta(\mathbf{n_k} + \bm\alpha)}{\beta(\bm\alpha)}\right] \left[ \phi^{-b(T)} \right]
\end{align}
\end{theorem}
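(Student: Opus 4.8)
The plan is to start from the definition of the probability of a parse tree in a PCFG as the product of the probabilities of every production rule applied in its derivation, and then exploit the telescoping structure built into the definition of $Q$. Under the bijection $T \leftrightarrow P_T$, each internal node $N_k$ of $T$ contributes exactly one split rule and each leaf $M_j$ contributes exactly one stop rule, so I would first write
\begin{equation}
    \Pr[P_T] = \left[\prod_{k=1}^{b(T)-1} \frac{Q(N_k \cap s_k)\, Q(N_k \cap \overline{s_k})}{\phi\, Q(N_k)}\right]\left[\prod_{j=1}^{b(T)} \frac{L(M_j)}{Q(M_j)}\right],
\end{equation}
substituting the definitions of the split and stop production probabilities.

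Next I would perform the telescoping cancellation of the $Q$ factors, which is the crux of the argument. In the split-rule product, the numerators range over the $Q$-values of the children of every internal node, i.e., over every node of $T$ except the root $N_0$, while the denominators range over the $Q$-values of every internal node. Collecting these, all $Q$-values of internal nodes other than the root cancel, leaving $Q(N_0)$ in the denominator and the product $\prod_j Q(M_j)$ over leaves in the numerator. These leaf factors then cancel exactly against the $Q(M_j)$ in the denominators of the stop-rule product, so after gathering the $b(T)-1$ copies of $\phi^{-1}$ I obtain
\begin{equation}
    \Pr[P_T] = \frac{\phi^{-(b(T)-1)}}{Q(N_0)} \prod_{j=1}^{b(T)} L(M_j) = \frac{\phi}{Q(N_0)}\,\phi^{-b(T)} \prod_{j=1}^{b(T)} \frac{\beta(\mathbf{n_j} + \bm\alpha)}{\beta(\bm\alpha)},
\end{equation}
where the last equality uses $L(M_j) = \beta(\mathbf{n_j}+\bm\alpha)/\beta(\bm\alpha)$ from \cref{thm:y_given_xt}. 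Since $\phi/Q(N_0)$ depends only on the full dataset and not on $T$, the proportionality to the posterior of \cref{thm:posterior} is immediate.

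To upgrade the proportionality to the stated exact equality $\Pr[P_T] = \Pr[T\mid X, Y]$, I would verify that PCFG* is a proper distribution: at each non-terminal $N$, summing the stop probability and all split probabilities gives $\frac{1}{Q(N)}\bigl(L(N) + \frac{1}{\phi}\sum_{s \in S(N)} Q(N\cap s)Q(N\cap\overline{s})\bigr) = 1$ directly by the definition of $Q(N)$. Because the Tree Constraints force bounding boxes to shrink and only finitely many trees exist, every derivation terminates, so $\Pr[P_T]$ is a genuine probability distribution summing to $1$ over all parse trees. Both $\Pr[P_T]$ and the posterior $\Pr[T\mid X,Y]$ are therefore normalized distributions proportional to the same function of $T$ with the same $T$-independent constant, which forces them to coincide. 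The main obstacle I anticipate is the bookkeeping in the telescoping step—specifically, arguing cleanly that the uncancelled denominator is exactly $Q(N_0)$ and that the leaf factors $Q(M_j)$ cancel precisely against the stop rules—and, secondarily, making the normalization/properness argument rigorous enough to justify replacing $\propto$ with $=$.
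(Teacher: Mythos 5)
Your proposal is correct and follows essentially the same route as the paper's own proof: expand $\Pr[P_T]$ as the product of split-rule and stop-rule probabilities, telescope the $Q$-factors down to $\frac{\phi^{1-b(T)}}{Q(N_0)}\prod_j L(M_j)$, and identify this with the likelihood from \cref{thm:y_given_xt} times the prior, hence with the posterior of \cref{thm:posterior}. Your closing step---explicitly checking that PCFG* is locally normalized (the rule probabilities at each non-terminal sum to $1$ by the definition of $Q$) and that every derivation terminates, so that two normalized distributions proportional to the same function must coincide---is a more careful justification than the paper's brief appeal to the bijection between supports, but it is the same argument in substance.
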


\begin{proof}
Suppose tree $T$ is generated by a sequence of splits $s_1, s_2, \ldots, s_t$ where each $s_i$ operates on node $N_i$. 
Note that all the $N_i$'s are distinct nodes as each node can only be split once.
Without loss of generality, assume splits $s_1, s_2, \ldots, s_b \neq \odot$ are the nonterminating splits and $s_{b+1} = \ldots =  s_t = \odot$ are the terminating splits.
The likelihood of $P_T$ being generated by the associated production rules in PCFG* are

\begin{align}
    &Pr[P_T] \\
    &=\left( \prod_{q = 1}^b \Pr[Q \rightarrow (Q \cap s_q)(Q \cap \bar{s}_q)] \right) \\ 
    &\hspace{50pt} \times \left( \prod_{p = b+1}^t \Pr[Q \rightarrow (Q \cap s_p)] \right) \\
    &=\left( \prod_{q = 1}^b \frac{Q(N_q \cap s_q)Q(N_q \cap \bar{s}_q)}{\phi Q(N_q)} \right) \\
    &\hspace{50pt} \times \left( \prod_{p = b+1}^t \frac{L(N_p)}{Q(N_p)} \right) 
\end{align}

Note that every value $Q(N_p)$ that appears in the denominator of the second product must appear exactly once in the numerator of the first product.
Conversely, every term of the form $Q(N_q \cap s_q)$ or $Q(N_q \cap \bar{s}_q)$ in the numerator of the first product must appear exactly once in the denominator of the second product.

As such, the product telescopes and each $Q$ term cancels except for $Q(N_0)$. We are left with
\begin{align}
    &\Pr[P_T] = \\
    & \frac{\phi^{1-b(T)}}{Q(N_0)} \left( \prod_{p = b+1}^t L(N_p) \right)  \\
    & \propto \phi \Pr[T | X] \left( \prod_{p = b+1}^t L(N_p) \right)  \\
    & \propto \Pr[T | X] \left( \prod_{p = b+1}^t L(N_p) \right)  \\
    &\propto \Pr[T | X] \prod_{\text{leaf nodes $M'$ in $T$}} \Pr[Y_{M'} | X_{M'}, \odot] \\
    &=  \Pr[Y | X, T] \Pr[T | X] \\
    &\propto \Pr[T | Y, X]
\end{align}

where in the antepenultimate line we used that each $M_p$ is a terminal node and in the penultimate line we used independence of the labels in each leaf node conditioned on their corresponding datapoints. 

Since $\Pr[P_T] \propto Pr[T|Y, X]$ and there is a bijection between the supports of each distribution, we must have $\Pr[P_T] = Pr[T|Y, X]$ under this same bijection.
\end{proof}

Theorem \ref{thm:parse_tree_prob} states that to sample BDTs from our Bayesian objective $\Pr[T | X, Y]$, we can sample parse trees in PCFG*.
The problem of sampling BDTs from $\Pr[T | X, Y]$ is thus reduced to sampling parse trees.

\subsection{Computing Score}
In order to sample from PCFG*, we must compute score function $Q$. 
Bounding boxes have a natural scoring given by the number of points they contain, which allows us compute their $Q$ values using dynamic programming naively in total time $O(n \cdot n^{2d})$ where $n$ is the total number of points in our dataset, $d$ is the number of features, and $O(n^{2d})$ is a loose upper bound on the number of bounding boxes.
As a result, we see that Bayesian decision trees can be sampled directly from this posterior in linear time after polynomial time (in $n$, with $d$ fixed) precomputation of $Q$.

\subsection{MAP Tree}
The maximum a posteriori (MAP) decision tree can be also be discovered in a similar fashion. We define a modified score function $Q_{\max}$ as follows
\begin{multline*}
    Q_{\max}(N) = \max ( L(N), \\
    \frac{1}{\phi} \max_{s \in S(N)} Q_{\max}(N \cap s) Q_{\max}(N \cap \overline{s}).
    )
\end{multline*}

With a new PCFG defined using this score function, we can choose the rule with the highest probability recursively at each step.
Algorithm \ref{alg:path_sample_best} describes how to sample from the PCFG once $Q_{\max}$ has been computed.

\subsection{Implicitly Generated Trees}
Since predictions are made only on the terminal node in which a query falls within a decision tree, generating a full tree is not necessary for \algname during inference.
Instead, only the path to the terminal nodes specific to the provided query needs to be sampled.

\begin{algorithm}[tb]
\caption{Path Sample for Query Response}
\label{alg:path_sample}
\textbf{Input}: Query point $q$ \\
\textbf{Output}: Prediction $y$
\begin{algorithmic}[1] 
\STATE $N \leftarrow N_0$
\STATE $terminal \leftarrow$ False
\WHILE{$terminal = $ False}
    \STATE Sample production rule $r$ according to the production rules probabilities in PCFG* for bounding box $N$
    \IF{$r = N \rightarrow M$}
        \STATE $N \leftarrow M$
        \STATE $terminal \leftarrow$ True
    \ELSE
    \STATE $s \leftarrow $ the production rule associated with split rule $r$
    \IF{$q \in N \cap s$} 
        \STATE $N \leftarrow N \cap s$
        \ELSE
            \STATE $N \leftarrow N \cap \overline{s}$
        \ENDIF
    \ENDIF
\ENDWHILE
\STATE Draw prediction $y$ from $Pr[y | M]$
\STATE \textbf{return} $y$
\end{algorithmic}
\end{algorithm}


In this fashion, multiple paths can be sampled quickly to simulate the prediction of an ensemble of Bayesian Decision Trees sampled from the posterior.
The exact query response of an ensemble across all supported trees can also be computed using dynamic programming in polynomial time.
A similar method can also be used to determine the MAP tree's response to future queries using the precomputed $Q_{\max}$ function.

\begin{algorithm}[tb]
\caption{Path Sample for Best Tree Query Response}
\label{alg:path_sample_best}
\textbf{Input}: Query point $q$ \\
\textbf{Output}: Prediction $y$
\begin{algorithmic}[1] 
\STATE $N \leftarrow N_0$
\STATE $terminal \leftarrow$ False
\WHILE{$terminal = $ False}
    \STATE $s \leftarrow \frac{1}{\phi}$argmax$_{s' \in S(N)} Q_{\max}(N \cap s')Q_{\max}(N \cap \bar{s'})$
    \IF{$L(N) > \frac{1}{\phi}Q_{\max}(N \cap s)Q_{\max}(N \cap \bar{s})$}
        \STATE $N \leftarrow M$
        \STATE $terminal \leftarrow \text{True}$
    \ELSE
        \IF{$q \in N \cap s$}
            \STATE $N \leftarrow N \cap s$
        \ELSE
            \STATE $N \leftarrow N \cap \overline{s}$
        \ENDIF
    \ENDIF
\ENDWHILE
\STATE Draw prediction $y$ from $Pr[y | M]$
\STATE \textbf{return} $y$
\end{algorithmic}
\end{algorithm}

\section{Experiments}
\label{sec:experiments}

\subsection{Datasets:}
\label{subsec:datasets}

We test the performance of \algname and other baseline algorithms across one synthetic dataset and four real-world datasets from the UCI Machine Learning Repository \cite{Dua:2019}. All datasets are approximately balanced.

\textbf{XOR dataset:} In the synthetic XOR dataset, each datapoint contains 20 features, each of which is a binary feature that is $1$ with probability $0.5$. The label for each datapoint is the 4-way XOR of the first four features. In this dataset, only the first four features are relevant to the label and the remaining 16 are not predictive of the label. A specific binary decision tree with 31 total nodes (16 leaf nodes, one for each possible value of the first four bits) should perfectly classify this dataset. 

\textbf{Iris dataset:} The Iris dataset consists of 50 different samples of each of three different species of Iris flowers. Each datapoint has four dimensions: the length and width of the sepals and petals. The dataset presents a classification task with three possible labels, one corresponding to each species.

\textbf{Caesarean Section dataset:} The Caesarian Section dataset consists of data for 80 different pregnancies. Each pregnancy corresponds to a datapoint with five features: maternal age, delivery number, delivery time (early, timely, or late), blood pressure (low, normal, or high), and presence of heart problems. The dataset presents a classification task with two possible labels corresponding to whether a Caesarian section was performed for the delivery. 

\textbf{Haberman dataset:} The Haberman dataset consists of data for 306 different patients who underwent surgery for breast cancer. Each patient corresponds to a datapoint with three features: patient age, year in which the surgery took place, and number of positive axillary nodes detected. The dataset presents a binary classification task corresponding to whether each patient survived for over 5 years post-surgery.

\textbf{Vertebral Column dataset:} The Vertebral Column dataset consists of data for 310 different orthopaedic patients. Each patient corresponds to a datapoint with six features corresponding to biomechanical attributes derived from the shape and orientation of the pelvis and lumbar spine: pelvic incidence, pelvic tilt, lumbar lordosis angel, sacral slope, pelvic radius, and grade of spondylolisthesis. The dataset presents a binary classification task corresponding to whether each patient displayed orthopaedic abnormalities.

\textbf{Preprocessing:} On all datasets, any features consisting of more than 10 distinct values were bucketed to 10 values. For the Vertebral Column dataset, feature values were bucketed to 5 values. No hyperparameter tuning was necessary for BDTs as ln$\phi$ was set to $2$, so no validation set was necessary. Hyperparameters for greedily-constructed trees were chosen to be their default values.

\textbf{Testing:} 10-fold cross validation was run and average accuracy was computed for each approach on all datasets.

\textbf{Baselines:} We compare the best BDT found by \algname to the best DT found by greedy splitting (we call this the ``single-best'' setting). Additionally, we compare an ensemble of BDTs to RF (we call this the ``ensemble'' setting).


\textbf{Metrics:} We evaluate our algorithm against baselines by comparing two metrics: classification accuracy and tree size (number of total nodes). Higher accuracies and/or smaller tree sizes are better.

\subsection{Results:}

Across all datasets, \algname performs either comparably to or better than RF in both the single-best and ensemble settings in accuracy. Additionally, \algname produces consistently smaller models to achieve the same performance in both settings.

On the XOR dataset, we note \algname is able to find trees that are of minimal size and achieve perfect classification accuracy. RF and the best single decision tree, however, are unable to perform as well, even though they construct significantly larger models. Inspection of greedily constructed DTs reveals that they often split on uninformative features, which increase model size without improving accuracy.

\begin{table}[ht]
\centering
\begin{tabular}{l|ll}
\textbf{Dataset}    & \textbf{Best CART} & \textbf{Best \algname} \\ \hline
\textbf{Iris}       & 0.967 ± 0.069               & 0.967 ± 0.092                   \\
\textbf{Caesarian}  & 0.525 ± 0.193               & \textbf{0.562 ± 0.311}          \\
\textbf{Haberman}   & 0.706 ± 0.219               & \textbf{0.719 ± 0.166}          \\
\textbf{Vertebral}  & 0.713 ± 0.167               & \textbf{0.716 ± 0.215}          \\
\textbf{Hidden XOR} & 0.718 ± 0.217               & \textbf{1.0 ± 0.0}             
\end{tabular}
\caption{Comparison of classification accuracies (\%) for the best-performing CART tree and the best-performing \algname tree on each dataset. Higher is better. On four of five datasets, \algname outperforms CART. Means and 95\% confidence intervals are obtained over 5 random trials.}
\end{table}

\begin{table}[ht]
\begin{tabular}{l|ll}
\textbf{Dataset}    & \textbf{RF} & \textbf{\algname Ensem.} \\ \hline
\textbf{Iris}       & 0.96 ± 0.091         & \textbf{0.967 ± 0.092}              \\
\textbf{Caesarian}  & 0.55 ± 0.263         & \textbf{0.562 ± 0.289}              \\
\textbf{Haberman}   & 0.703 ± 0.197        & \textbf{0.716 ± 0.205}              \\
\textbf{Vertebral}  & 0.726 ± 0.191        & \textbf{0.732 ± 0.211}              \\
\textbf{Hidden XOR} & 0.78 ± 0.121         & \textbf{1.0 ± 0.0}                 
\end{tabular}
\caption{Comparison of classification accuracies (\%) for RF and an \algname ensemble on each dataset. Higher is better. On all five datasets, \algname outperforms CART. Means and 95\% confidence intervals are obtained over 5 random trials.}
\end{table}

\begin{table}[ht]
\begin{tabular}{l|ll}
\textbf{Dataset}    & \textbf{Best CART} & \textbf{Best \algname} \\ \hline
\textbf{Iris}       & 19.6 ± 4.153       & \textbf{7.0 ± 0.0}               \\
\textbf{Caesarian}  & 70.2 ± 2.024       & \textbf{3.2 ± 2.225}             \\
\textbf{Haberman}   & 161.0 ± 17.335     & \textbf{5.6 ± 3.227}             \\
\textbf{Vertebral}  & 106.2 ± 8.097      & \textbf{15.6 ± 4.545}            \\
\textbf{Hidden XOR} & 392.2 ± 103.503    & \textbf{31.0 ± 0.0}             
\end{tabular}
\caption{Comparison of model sizes (number of nodes) for the best-performing CART tree and the best-performing \algname tree on each dataset. Smaller is better. On all five datasets, \algname outperforms CART. Means and 95\% confidence intervals are obtained over 5 random trials.}
\end{table}



\section{Conclusions and Limitations}
\label{sec:conclusions_and_limitations}

Our results demonstrate that \algname achieves similar performance to more common, greedily-constructed trees and constructs significantly smaller models.
This suggests an application to compute-constrained hardware, such as embedded systems.
We also note that \algname is expected to excel on large datasets in which features interact to produce a label, such as the XOR dataset, because \algname is able to do a global optimization over several splits rather than greedily splitting as in CART.
A constraint of our work is that \algname assumes a specific prior distribution over trees which allows us a subtree-decomposable objective function; while our experiments demonstrate this of prior works well in practice, other priors may be desirable in other settings. 
Future work will focus on discovering improved priors which fit into this framework.
A further, significant limitation of our work is that our \algname is restricted to datasets with small dimension due to the exponential scaling in $d$.
In practice, it may be possible to perform some form of dimensionality reduction on the dataset to make \algname feasible on high-dimensional datasets or to approximate the score function using machine learning methods. Since \algname outperforms greedy decision-tree-generation methods when labels depend on more complex interaction between features, we anticipate such an extension of our work to higher-dimensional datasets would be valuable.
Another significant constraint is that \algname requires significant additional storage space compared to the greedy approach, and significantly larger training time for training only a small number (e.g., $1$) of trees.
In practice, it may therefore be best to use \algname in settings with significant computational resources (e.g., server-side) before providing the best single tree to more compute-constrained devices (e.g., smartphones).

\bibliography{updated}
\bibliographystyle{icml2023}

\include{8-Appendix1}

\end{document}